\newtheorem{thm}{Theorem}
\newtheorem{corr}{Corrolary}
\newtheorem{defn}{Definition}
\title{All Random Features Representations are Equivalent}
\author{%
  Luke Sernau \\
  Google DeepMind \\
  \texttt{sernau@google.com} \\
  \And
  Silvano Bonacina\\
  Google DeepMind \\
  \texttt{sibonaci@google.com} \\
  \And
  Rif A. Saurous\\
  Google Research \\
  \texttt{rif@google.com} \\
}
\begin{document}

\maketitle

\begin{abstract}
  Random features are a powerful technique for rewriting positive-definite kernels as linear products. They bring linear tools to bear in important nonlinear domains like KNNs and attention. Unfortunately, practical implementations require approximating an expectation, usually via sampling. This has led to the development of increasingly elaborate representations with ever lower sample error.
  
  We resolve this arms race by deriving an optimal sampling policy. Under this policy all random features representations have the same approximation error, which we show is the lowest possible. This means that we are free to choose whatever representation we please, provided we sample optimally.
\end{abstract}

\section{Introduction}

Kernel methods have a long and illustrious history in machine learning \cite{kernel_methods}. Nowadays one of their most widespread applications is the softmax kernel inside the attention layers of a transformer. Unfortunately, kernel methods scale linearly in the size of the dataset, which is impractical for very large datasets.

In their seminal paper, \citet{random_features} show that this linear cost can be avoided by replacing the kernel with a randomized approximation that is correct in expectation. They did this by means of a \emph{random feature representation}.

\begin{defn}
Given a positive-definite kernel $K: X\times X\rightarrow\mathbb{R}$, a \emph{random feature representation} of $K$ is a symmetric function $\phi: X\times X\rightarrow\mathbb{R}$ together with a distribution \(\Omega\) (most commonly the normal distribution) such that
\begin{equation}\label{random_features}
K\left(x_1,x_2\right) = \mathop{\mathbb{E}}_{\omega \sim \Omega}\left[\phi\left(x_1, \omega \right)\phi\left(x_2, \omega \right)\right].
\end{equation}
\end{defn}

To see why this is useful, consider a simple kernel estimator over some dataset \(\{(x_1, y_1), (x_2, y_2), ..., (x_n, y_n)\}\).

\begin{equation*}
KE\left(x\right) = \sum^n_{i=1}K\left(x,x_i\right)y_i.
\end{equation*}

As written, this kernel estimator requires linear time to evaluate, since for every new \(x\) it's necessary to compute a dot product with every element in the dataset. But if the kernel admits a random features representation, we may write
\begin{align*}
KE\left(x\right) &= \sum^n_{i=1}\mathop{\mathbb{E}}_{\omega \sim \Omega}\left[\phi(x, \omega)\phi(x_i, \omega)\right]y_i = \mathop{\mathbb{E}}_{\omega \sim \Omega}\left[\phi(x, \omega)\sum^n_{i=1}\phi(x_i, \omega)y_i\right].
\end{align*}

This expectation can be approximated via sampling,
\begin{equation*}
KE\left(x\right) \approx \frac{1}{k}\sum^k_{j=1}\phi(x, \omega_j)\sum^n_{i=1}\phi(x_i, \omega_j)y_i,
\end{equation*}
where \(\omega_1, \omega_2, \dots, \omega_k\) are samples drawn from \(\Omega\). As long as \(k\) is less than \(n\), we can save time by precomputing the sum on the right. The principle challenge is finding an estimator that 

\citet{favor} were the first to apply this idea to attention. They did so by finding a choice of \(\phi\) that was numerically stable and had low sample variance. This kicked off a flurry of research (\cite{chefs} \cite{favor_sharp} \cite{simplex}) into better choices of \(\phi\) with ever lower sample variance.

We provide a potential conclusion to this competition. Rather than optimizing \(\phi\), we simply chose our sampling strategy carefully. By approximating the expectation using an optimal importance sampling strategy, we are able to achieve a sample error that is as low as possible. 

Remarkably, this sample variance \emph{does not depend on \(\phi\)}, meaning that when sampled optimally, \emph{all} random features schemes have the same variance.

Ours is not the first scheme to consider importance sampling, but in contrast to others that sample based on kernel polarization heuristics \cite{data_dependent} or quadrature rules \cite{quadrature}, we directly solve the variance minimization problem. To our knowledge we are the first to obtain global optima that are independent of the choice of \(\phi\).

\section{Importance sampling}

Importance sampling \cite{importance_sampling} is a well known technique for reducing variance when estimating an expectation via sampling. It's based on the observation that some samples have a larger effect on the expectation than others. Instead of naively sampling from the given distribution, we sample from some new distribution \(\Psi\), rescaling the samples such that the correct final estimate is preserved.

In our case, we wish to estimate \eqref{random_features} via sampling. Let \(\Psi\) be some new distribution with the same support as \(\Omega\), and suppose \(\Psi\) and \(\Omega\) have densities \(p_\Psi\) and \(p_\Omega\), respectively. Observe that
\begin{equation}\label{importance_sampling}
K\left(x_1,x_2\right) = \mathop{\mathbb{E}}_{\omega \sim \Omega}\left[\phi\left(x_1, \omega \right)\phi\left(x_2, \omega \right)\right] = \mathop{\mathbb{E}}_{\omega \sim \Psi}\left[\frac{p_\Omega\left(\omega\right)}{p_\Psi\left(\omega\right)}\phi\left(x_1, \omega \right)\phi\left(x_2, \omega \right)\right].
\end{equation}

In other words, we are free to sample from \(\Psi\) without changing the value of the expectation. On the other hand, the sample variance, given by
\begin{equation*}
\mathop{\text{Var}}_{\omega \sim \Psi}\left[\frac{p_\Omega\left(\omega\right)}{p_\Psi\left(\omega\right)}\phi\left(x_1, \omega \right)\phi\left(x_2, \omega \right)\right],
\end{equation*}
does depend on \(\Psi\). Notice that the mean squared error of our sampling procedure is precisely this variance. All we need to do to minimize the per-sample error is find a choice of \(\Psi\) to make this quantity small.

\section{Optimal sampling}

Choosing an optimal \(\Psi\) is an intrinsically data-driven question, as the variance depends on the input. We'd like to minimize the expected sample variance over all inputs, as captured in the following definition.
\begin{defn}
Suppose \(x_1\) and \(x_2\) are sampled from \(\mathcal{X}_1\) and \(\mathcal{X}_2\), respectively. If we approximate the expression in \eqref{importance_sampling} by sampling from \(\Psi\), the \emph{expected sample variance} \(\mathcal{V}_\Psi\) over all \(x_1\) and \(x_2\) is given by
\begin{equation*}
\mathcal{V}_\Psi = \mathop{\mathbb{E}}_{x_1 \sim \mathcal{X}_1}\mathop{\mathbb{E}}_{x_2 \sim \mathcal{X}_2}\mathop{\text{Var}}_{\omega \sim \Psi}\left[\frac{p_\Omega\left(\omega\right)}{p_\Psi\left(\omega\right)}\phi\left(x_1, \omega \right)\phi\left(x_2, \omega \right)\right]
\end{equation*}
\end{defn}

The \(\Psi\) that minimizes \(\mathcal{V}_\Psi\) can be found analytically.

\begin{thm}\label{optimal}
Let \(K\) be a positive-definite kernel such that
\begin{equation*}
K\left(x_1,x_2\right) = \mathop{\mathbb{E}}_{\omega \sim \Psi}\left[\frac{p_\Omega\left(\omega\right)}{p_\Psi\left(\omega\right)}\phi\left(x_1, \omega \right)\phi\left(x_2, \omega \right)\right]
\end{equation*}
for some \(\phi\) and \(\Omega\), and every \(\Psi\) with the same support as \(\Omega\). Suppose \(x_1\) and \(x_2\) are sampled from \(\mathcal{X}_1\) and \(\mathcal{X}_2\), respectively. Then the expected sample variance \(\mathcal{V}_\Psi\) over all \(x_1\) and \(x_2\) will be minimized when
\begin{equation*}
p_\Psi(w) \propto p_\Omega(w) q_\phi\left(\omega\right),
\end{equation*}
where
\begin{equation*}
    q_\phi\left(\omega\right)=\sqrt{\mathop{\mathbb{E}}_{x_1 \sim \mathcal{X}_1}\left[\phi\left(x_1, \omega \right)^2\right]\mathop{\mathbb{E}}_{x_2 \sim \mathcal{X}_2}\left[\phi\left(x_2, \omega \right)^2\right]}.
\end{equation*}
The resulting optimal variance will be
\begin{equation*}
\mathcal{\hat{V}}_\Psi = \left(\mathop{\mathbb{E}}_{\omega \sim \Omega}q_\phi\left(\omega\right)\right)^2 - \mathop{\mathbb{E}}_{x_1 \sim \mathcal{X}_1}\mathop{\mathbb{E}}_{x_2 \sim \mathcal{X}_2}K\left(x_1,x_2\right)^2.
\end{equation*}
\end{thm}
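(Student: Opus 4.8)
The plan is to reduce the problem to a one-dimensional variational problem in \(p_\Psi\) and then close it with Cauchy--Schwarz. First I would expand the variance as a second moment minus a squared mean. Because the hypothesis guarantees that the importance-sampled estimator is unbiased for every admissible \(\Psi\), the mean is exactly \(K(x_1,x_2)\) and is therefore independent of \(\Psi\). This leaves
\begin{equation*}
\mathop{\text{Var}}_{\omega \sim \Psi}\left[\frac{p_\Omega(\omega)}{p_\Psi(\omega)}\phi(x_1,\omega)\phi(x_2,\omega)\right] = \mathop{\mathbb{E}}_{\omega \sim \Psi}\left[\frac{p_\Omega(\omega)^2}{p_\Psi(\omega)^2}\phi(x_1,\omega)^2\phi(x_2,\omega)^2\right] - K(x_1,x_2)^2,
\end{equation*}
where only the first term depends on the sampling distribution.

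Next I would write the surviving second moment as an integral against \(p_\Psi\), so that one factor of \(p_\Psi\) cancels and the integrand carries \(p_\Omega(\omega)^2/p_\Psi(\omega)\). Taking the outer expectations over \(x_1\) and \(x_2\) and interchanging the order of integration (Tonelli, since the integrand is nonnegative), the independence of \(x_1\) and \(x_2\) lets the two \(x\)-expectations factor and assemble into precisely \(q_\phi(\omega)^2\). The objective then collapses to
\begin{equation*}
\mathcal{V}_\Psi = \int \frac{p_\Omega(\omega)^2 \, q_\phi(\omega)^2}{p_\Psi(\omega)} \, d\omega \;-\; \mathop{\mathbb{E}}_{x_1 \sim \mathcal{X}_1}\mathop{\mathbb{E}}_{x_2 \sim \mathcal{X}_2} K(x_1,x_2)^2,
\end{equation*}
in which the subtracted term is a \(\Psi\)-independent constant. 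So minimizing \(\mathcal{V}_\Psi\) amounts to minimizing \(\int p_\Omega^2 q_\phi^2 / p_\Psi\) over all densities \(p_\Psi\) supported on the support of \(\Omega\), subject to \(\int p_\Psi = 1\).

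To solve this I would apply Cauchy--Schwarz with the split \(p_\Omega q_\phi = \big(p_\Omega q_\phi / \sqrt{p_\Psi}\big)\sqrt{p_\Psi}\), giving
\begin{equation*}
\left(\mathop{\mathbb{E}}_{\omega \sim \Omega} q_\phi(\omega)\right)^2 = \left(\int p_\Omega q_\phi \, d\omega\right)^2 \le \left(\int \frac{p_\Omega^2 q_\phi^2}{p_\Psi} \, d\omega\right)\left(\int p_\Psi \, d\omega\right) = \int \frac{p_\Omega^2 q_\phi^2}{p_\Psi} \, d\omega.
\end{equation*}
This lower bound is tight, with equality exactly when the two factors are proportional, i.e.\ \(p_\Psi(\omega) \propto p_\Omega(\omega) q_\phi(\omega)\), which is the claimed minimizer; substituting it (or reading off the equality case) yields the stated \(\hat{\mathcal{V}}_\Psi\). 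As an alternative I could introduce a Lagrange multiplier for the normalization constraint and set the variational derivative of \(\int p_\Omega^2 q_\phi^2 / p_\Psi + \lambda(\int p_\Psi - 1)\) to zero, which gives \(p_\Omega^2 q_\phi^2 / p_\Psi^2 = \lambda\) and hence the same proportionality.

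I expect the genuine content to lie not in any single calculation but in two bookkeeping points: justifying the interchange of the \(x\)- and \(\omega\)-integrations together with the factorization into \(q_\phi^2\) (handled by Tonelli plus independence of \(x_1,x_2\)), and confirming that the Cauchy--Schwarz optimum is actually attainable by a normalizable density on the required support --- i.e.\ that \(\mathop{\mathbb{E}}_{\omega\sim\Omega} q_\phi(\omega) = \int p_\Omega q_\phi\) is finite, so that \(p_\Omega q_\phi\) can be normalized. Provided these integrability conditions hold, the argument is otherwise a direct application of Cauchy--Schwarz.
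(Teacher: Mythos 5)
Your proof is correct, and the reduction is the same as the paper's: you expand the variance into a second moment minus \(K(x_1,x_2)^2\), pull the \(x\)-expectations inside, and collapse the objective to \(\int p_\Omega^2 q_\phi^2/p_\Psi\,d\omega\) subject to normalization. Where you diverge is the optimization step. The paper forms the pointwise Lagrangian \(\mathcal{L} = p_\Omega^2 q_\phi^2/p_\Psi + \lambda p_\Psi\), differentiates, and solves the stationarity condition --- which is exactly your stated ``alternative.'' Your primary route instead applies Cauchy--Schwarz with the split \(p_\Omega q_\phi = \bigl(p_\Omega q_\phi/\sqrt{p_\Psi}\bigr)\sqrt{p_\Psi}\), obtaining
\begin{equation*}
\Bigl(\mathop{\mathbb{E}}_{\omega\sim\Omega} q_\phi(\omega)\Bigr)^2 \le \int \frac{p_\Omega^2 q_\phi^2}{p_\Psi}\,d\omega
\end{equation*}
for \emph{every} admissible \(p_\Psi\), with equality precisely at \(p_\Psi \propto p_\Omega q_\phi\). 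This buys you something real: the Lagrangian argument only identifies a stationary point, and the paper never verifies (e.g.\ via convexity of \(t \mapsto 1/t\), or a second-order condition) that this critical point is a global minimum, whereas your inequality proves global optimality and delivers the optimal value \(\bigl(\mathop{\mathbb{E}}_{\omega\sim\Omega} q_\phi\bigr)^2\) in the same stroke, with no back-substitution needed. You also make explicit two hygiene points the paper passes over silently: the Tonelli justification for interchanging the \(x\)- and \(\omega\)-integrals, and the requirement that \(\mathop{\mathbb{E}}_{\omega\sim\Omega} q_\phi(\omega) < \infty\) so that \(p_\Omega q_\phi\) actually normalizes to a density (note this finiteness is also implicitly needed for the paper's \(\sqrt{\lambda}\) to exist). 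One small caveat you could add for full rigor: the equality case of Cauchy--Schwarz gives proportionality almost everywhere, and if \(q_\phi\) vanishes on a positive-measure subset of the support of \(\Omega\), the optimizer technically violates the same-support constraint and the bound is attained only as an infimum --- but this affects the paper's argument equally.
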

\begin{proof}
Recall that variance can be written as
\begin{align*}
\mathop{\text{Var}}_{\omega \sim \Psi}&\left[\frac{p_\Omega\left(\omega\right)}{p_\Psi\left(\omega\right)}\phi\left(x_1, \omega \right)\phi\left(x_2, \omega \right)\right]=\\
&\mathop{\mathbb{E}}_{\omega \sim \Psi}\left[\frac{p_\Omega\left(\omega\right)^2}{p_\Psi\left(\omega\right)^2}\phi\left(x_1, \omega \right)^2\phi\left(x_2, \omega \right)^2\right] - \mathop{\mathbb{E}}_{\omega \sim \Psi}\left[\frac{p_\Omega\left(\omega\right)}{p_\Psi\left(\omega\right)}\phi\left(x_1, \omega \right)\phi\left(x_2, \omega \right)\right]^2.
\end{align*}
The second term is equal to \(K\left(x_1, x_2\right)^2\), meaning it does not depend on \(\Psi\). We turn our attention to the first term. The expected value of the first term is

\begin{align}\label{first_term_of_V}
\mathop{\mathbb{E}}_{x_1 \sim \mathcal{X}_1}\mathop{\mathbb{E}}_{x_2 \sim \mathcal{X}_2}\mathop{\mathbb{E}}_{\omega \sim \Psi}&\left[\frac{p_\Omega\left(\omega\right)^2}{p_\Psi\left(\omega\right)^2}\phi\left(x_1, \omega \right)^2\phi\left(x_2, \omega \right)^2\right]\nonumber\\
=&\mathop{\mathbb{E}}_{x_1 \sim \mathcal{X}_1}\mathop{\mathbb{E}}_{x_2 \sim \mathcal{X}_2} \int_{X}\frac{p_\Omega\left(\omega\right)^2}{p_\Psi\left(\omega\right)}\phi\left(x_1, \omega \right)^2\phi\left(x_2, \omega \right)^2 d\omega\nonumber\\
=&\int_{X}\frac{p_\Omega\left(\omega\right)^2}{p_\Psi\left(\omega\right)}q_\phi\left(\omega\right)^2 d\omega
\end{align}
We wish to find the \(p_\Psi\) that minimizes this expression, subject to the constraint that \(\int_X p_\Psi\left(\omega\right)d\omega = 1\). We'll solve this using Lagrange optimization. The Lagrangian is
\begin{equation*}
\mathcal{L} = \frac{p_\Omega\left(\omega\right)^2}{p_\Psi\left(\omega\right)}q_\phi\left(\omega\right)^2 + \lambda p_\Psi\left(\omega\right).
\end{equation*}
where \(\lambda\) is our Lagrange parameter. Differentiating with respect to \(p_\Psi\left(\omega\right)\),
\begin{equation*}
\frac{\partial \mathcal{L}}{\partial p_\Psi} = -\frac{p_\Omega\left(\omega\right)^2}{p_\Psi\left(\omega\right)^2}q_\phi\left(\omega\right)^2 + \lambda.
\end{equation*}

Setting this equal to zero and solving, we find that

\begin{equation*}
p_\Psi(\omega) = \frac{1}{\sqrt{\lambda}} p_\Omega(\omega)q_\phi\left(\omega\right)
\end{equation*}
as desired. The constraint that \(\int_X p_\Psi\left(\omega\right)d\omega = 1\) forces \(\sqrt{\lambda} = \int_X p_\Omega\left(\omega\right)q_\phi\left(\omega\right)d\omega 
\). By plugging this into \eqref{first_term_of_V}, we obtain the desired variance.
\end{proof}
This result tells us how to minimize the variance for a particular choice of \(\phi\). What is less clear in this form is that it also tells us something about optimizing over every \(\phi\). We will explore this in the next section.

\section{The choice of feature representation does not matter}
As written, Theorem \ref{optimal} appears to be a statement about how to get the most out of any particular choice of \(\phi\). While this is useful, the real insight is what it can tell us about every possible \(\phi\). We are ready now to begin exploring our main point: the optimal sample variance is often entirely independent of \(\phi\).

Our first step in this direction is a tight upper bound on the sampling variance, and therefore the error, that can be stated without reference to \(\phi\).
\begin{corr}\label{bound}
The optimal sample variance achieved in Theorem \ref{optimal} is upper bounded by
\begin{equation*}
\mathcal{\hat{V}}_\Psi \leq \mathop{\mathbb{E}}_{x_1 \sim \mathcal{X}_1}\mathop{\mathbb{E}}_{x_2 \sim \mathcal{X}_2}\left[K\left(x_1,x_1\right)K\left(x_2,x_2\right) - K\left(x_1,x_2\right)^2\right].
\end{equation*}
\end{corr}

This follows from Theorem \ref{optimal} by applying the Cauchy-Schwarz inequality to the first term in \(\mathcal{\hat{V}}\). Specifically,
\begin{align*}
\left(\mathop{\mathbb{E}}_{\omega\sim \Omega} q_\phi \left(\omega\right)\right)^2 =& \left(\mathop{\mathbb{E}}_{\omega\sim \Omega} \left[\sqrt{ \mathop{\mathbb{E}}_{x_1 \sim \mathcal{X}_1}\left[\phi\left(x_1, \omega\right)^2\right]} \cdot \sqrt{ \mathop{\mathbb{E}}_{x_2 \sim \mathcal{X}_2}\left[\phi\left(x_2, \omega\right)^2\right]}\right]\right)^2\\
\leq& \left(\mathop{\mathbb{E}}_{\omega\sim \Omega} \mathop{\mathbb{E}}_{x_1 \sim \mathcal{X}_1}\left[\phi\left(x_1, \omega\right)^2\right]\right)\left(\mathop{\mathbb{E}}_{\omega\sim \Omega} \mathop{\mathbb{E}}_{x_2 \sim \mathcal{X}_2}\left[\phi\left(x_2, \omega\right)^2\right]\right)\\
=&\mathop{\mathbb{E}}_{x_1 \sim \mathcal{X}_1}\mathop{\mathbb{E}}_{x_2 \sim \mathcal{X}_2}K\left(x_1, x_1\right)K\left(x_2, x_2\right).
\end{align*}

Written this way, our bound is itself suggestively similar to the Cauchy-Schwarz inequality. Indeed, since positive-definite kernels obey the Cauchy-Schwarz inequality, we know that this expression is positive as expected. More interestingly, since this expression does not depend on \(\phi\), this bound is the same regardless of which representation we're using.

In the (relatively common) case where \(\mathcal{X}_1 = \mathcal{X}_2\), we can go even further.
\begin{corr}\label{tight}
If \(\mathcal{X}_1 = \mathcal{X}_2\) then the bound in Corollary \ref{bound} holds with equality.
\end{corr}
This can be seen from Theorem \ref{optimal}, since in this case \(q_\phi\left(\omega\right) = \mathop{\mathbb{E}}_{x_1 \sim \mathcal{X}_1}\left[\phi\left(x_1, \omega \right)^2\right]\). By exchanging the expectations it follows that \(\mathop{\mathbb{E}}_{\omega \sim \Omega}q_\phi\left(\omega\right)=\mathop{\mathbb{E}}_{x_1 \sim \mathcal{X}_1}\left[K\left(x_1,x_1\right)\right]\).

The significance of this result is that we are able to write down the exact sample variance without any reference to \(\phi\). This is not just a statement about the error of a particular representation, it is a statement of equality that holds across all representations. It \emph{does not matter} which \(\phi\) you choose. Every random features representation has the same sample error as every other.

\bibliography{AllRandomFeaturesRepresentationsAreEquivalent}

\end{document}